\theoremstyle{plain}
\newtheorem{theorem}{Theorem}[section]
\theoremstyle{definition}
\theoremstyle{remark}
\newcommand{\eps}{\ensuremath{\varepsilon}}
\newcommand{\mL}{\ensuremath{\mathcal L}}
\newcommand{\K}{\ensuremath{\mathcal K}}
\newcommand{\x}{\ensuremath{W}}
\newcommand{\vv}{\ensuremath{\mathbf V}}
\newcommand{\reals}{\ensuremath{\mathcal R}}
\icmltitlerunning{Submission and Formatting Instructions for ICML 2023}
\begin{document}

\twocolumn[
\icmltitle{Chain of LoRA: \\ Efficient Fine-tuning of Language Models via Residual Learning}



\icmlsetsymbol{equal}{*}

\begin{icmlauthorlist}
\icmlauthor{Wenhan Xia}{pu}
\icmlauthor{Chengwei Qin}{ntu}
\icmlauthor{Elad Hazan}{pu}

\end{icmlauthorlist}

\icmlaffiliation{pu}{Department of Computer Science, Princeton University}
\icmlaffiliation{ntu}{School of Computer Science and Engineering, Nanyang Technological University}

\icmlcorrespondingauthor{Wenhan Xia}{wxia@princeton.edu}

\icmlkeywords{Machine Learning, ICML}

\vskip 0.3in
]



\printAffiliationsAndNotice{}  

\begin{abstract}
Fine-tuning is the primary methodology for tailoring pre-trained large language models to specific tasks. As the model's scale and the diversity of tasks expand, parameter-efficient fine-tuning methods are of paramount importance. One of the most widely used family of methods is low-rank adaptation (LoRA) and its variants. LoRA encodes weight update as the product of two low-rank matrices. Despite its advantages, LoRA falls short of full-parameter fine-tuning in terms of generalization error for certain tasks. 

We introduce Chain of LoRA (COLA), an iterative optimization framework inspired by the Frank-Wolfe algorithm, to bridge the gap between LoRA and full parameter fine-tuning,  without incurring additional computational costs or memory overheads. COLA employs a residual learning procedure where it merges learned LoRA modules into the pre-trained language model parameters and re-initilize optimization for new born LoRA modules. We provide theoretical convergence guarantees as well as empirical results to validate the effectiveness of our algorithm. Across various models (OPT and llama-2) and seven benchmarking tasks, we
demonstrate that COLA can consistently outperform LoRA without additional computational or memory costs. 
\end{abstract}

\section{Introduction}

Pre-trained language models have become instrumental in natural language processing, demonstrating remarkable performance across various fields. Large language model fine-tuning is a process for adapting pre-trained models to specific tasks, allowing for improved performance on various real-world applications, such as machine translation and code analysis~\cite{lewis2019bart,wang2021codet5,qin2023improving}. Despite the notable benefits of full parameter fine-tuning, the computational expenses and memory requirements it entails present significant challenges, particularly in light of the ever-growing size of large language models.

For this reason, parameter efficient finetuning (PEFT) methods have received significant attention~\cite{pfeiffer2020adapterfusion,he2021towards}. Instead of adjusting all the parameters of the model, PEFT involves fewer adjustments to the original model parameters to specialize its knowledge for a particular application~\cite{houlsby2019parameter, prompttuning}. One of the most widely used paradigms in parameter efficient fine turning is Low-Rank Adaptation (LoRA)~\cite{hu2021lora}. LORA focuses on modifying only a small, low-rank portion of the model's weights. This is achieved by adding low-rank matrices to the weights of the model during training. The advantage of LORA is that it significantly reduces the computational burden and time required for fine-tuning, making it more efficient and scalable, especially for very large models.
Despite the significant computational advantage of LORA, it is inferior to full parameter fine-tuning in terms of generalization error.

In this paper we investigate whether the generalization error gap between LORA and full parameter fine-tuning can be reduced albeit preserving the computational efficiency. We do this by learning a higher rank augmentation of the LLM weights by method of residual learning. The high rank augmentation is composed of several low rank structures. Namely, we use
an iterative procedure to learn a low-rank addition to
the existing approximation, thereby increasing its rank. Hence, we call the procedure ``chain of LORA", or COLA for short. 

This residual learning method is inspired by the Frank-Wolfe algorithm as applied to  matrix  completion, which augments an existing completion by a rank one addition. Over many iterations, this residual learning procedure can be shown to produce an accurate higher rank completion.   

\textbf{Our contributions}

We present an iterative optimization framework, COLA, for parameter efficient fine tuning. 
COLA is based on the Frank Wolfe method from mathematical optimization, and we formalize this relationship.

We demonstrate the effectiveness of COLA via extensive experiments across datasets and models. COLA consistently outperforms LoRA in terms of generalization error with no additional cost of compute. For example, fine-tuning OPT-1.3B with COLA brings a relative $6.47\%$ test accuracy gain to LoRA on WSC. LLama2-7B experiments shows up to $4.4\%$ relative test score improvement.

We provide theoretical analysis of the iterative learning framework employed in our proposed method, demonstrating the  convergence to stationary points in the setting of smooth nonconvex optimization.

\section{Related Work}
Conventional full parameter fine-tuning becomes computationally impractical as both model size and the number of downstream tasks increase. In response to this challenge, recent advancements in parameter-efficient finetuning methods suggest modifying only a small portion of parameters while maintaining the majority of pre-trained model parameters unchanged.

\paragraph{Adaper based methods.}
Within this domain, a line of research known as adapter based approach involves inserting compact adapter modules between transformer layers. Throughout the fine-tuning process, only the newly introduced lightweight adapters are trained, while the pre-trained model remains frozen and shared across tasks, thus significantly enhancing the practicality and efficiency of adapting large models to diverse tasks. \citet{houlsby2019parameter} propose a new bottleneck adapter module and position it twice within each transformer~\cite{vaswani2017attention} layer. The adapter employs a bottleneck architecture, incorporating a skip connection to effectively constrain the number of parameters involved in the module design. Variant adapter architecture and placements are proposed in concurrent work \cite{bapna-firat-2019-simple,pmlr-v97-stickland19a}. Building upon the success of adapter-based approaches for single-task adaptation, subsequent studies extend the adapter-based architecture to the realm of multi-task learning scenarios~\cite{mahabadi2021parameter}. AdapterFusion proposes a two-stage learning framework where task-specific adapters are learned and then later combined in a separate knowledge composition step ~\cite{pfeiffer2020adapterfusion}.

\paragraph{Prefix tuning methods.}
Alternative research investigates the incorporation of tunable parameters into both the input and hidden layers, as explored by ~\citet{li2021prefix}. These lightweight task-specific vectors, commonly referred to as the prefix, offer a notable reduction in the memory load required for storing task-specific models. Additionally, they outperform full fine-tuning, particularly in scenarios with limited data availability. Efficient prompt tuning further simplifies prefix tuning by concatenating a trainable tensor ("soft prompt") with the model's input embeddings ~\cite{prompttuning}. These "soft prompts" are learned through backpropagation to perform downstream tasks. 

\paragraph{LoRA and its variants.}
The most closely related work to ours is LoRA~\cite{hu2021lora}, which introduces trainable low-rank matrices to approximate weight update during fine-tuning. We elaborate on its technical details in the preliminaries section below. Building upon the foundation laid by LoRA, numerous recent studies have explored its variants from different perspectives. QLoRA~\cite{dettmers2023qlora} further leverages 4-bit quantization to effectively and efficiently fine-tune LLMs. To enhance parameter efficiency, Tied-LoRA, introduced by ~\citet{renduchintala2023tied}, incorporates weight tying and selective training. ~\citet{chen2023longlora} propose LongLoRA to extend the context sizes of LLMs with limited computation cost. MultiLoRA~\cite{wang2023multilora} is designed specifically for better multi-task adaptation. Concurrently,~\citet{sheng2023s} introduce S-LoRA, offering a framework that enhances the scalable serving of multiple LoRA adapters.

Optimization for fine tuning of LLM has special challenges, notably memory constraints. For this reason,  zero-order optimization methods were proposed \cite{mezo}.


\section{Our Method}
\label{sec:method}
In this section we describe our method for fine tuning. It is divided into two parts, in the first we present necessary background for our exposition, and the second gives details of COLA. 

\begin{figure*}[h]
  \begin{center}
   \includegraphics[width=1.8\columnwidth]{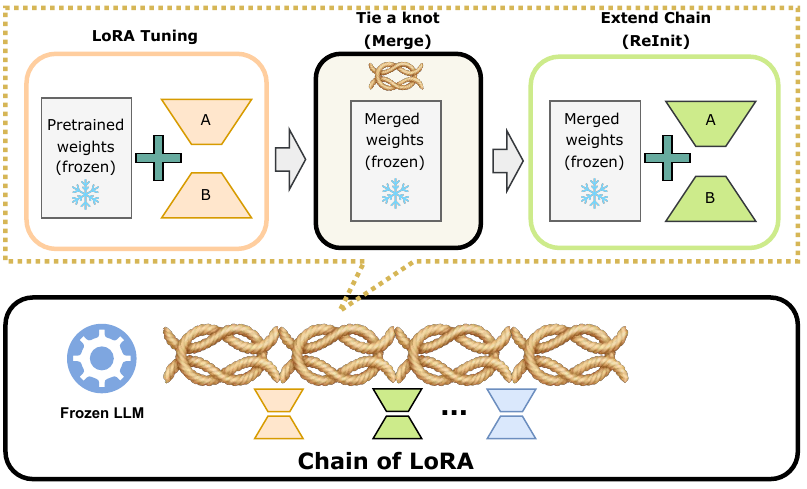}
  \end{center}
  \caption{An illustration of Chain of LoRA. Our approach starts with a frozen LLM, and learns a sequence of low-rank matrices to approximate a high-rank augmentation to perform task adaptation. As shown in the dashed line box, each residual learning procedure consists of three steps: (1) LoRA Tuning, (2) Tie a knot, and (3) Extend the chain. In step 1, low-rank LoRA modules are fine-tuned, In step 2, the learned LoRA weights are merged into the frozen model. In step 3, a new LoRA module is instantiated and the optimizer state is reset. These three steps are repeated in this residual learning paradigm.}
  \label{fig:COLA}
\end{figure*}

\subsection{Preliminaries}
\textbf{Low Rank Adaptation (LoRA)} ~\cite{hu2021lora} aims to improve the efficiency of fine-tuning large language models by training much smaller low-rank decomposition matrices of certain weights. It hypothesizes a low "intrinsic rank" of weight updates at task adaptation and injects trainable low-rank decomposition matrices into each layer of the Transformer architecture. Consider a weight matrix $W_{frozen}$ from the pre-trained model, the weight update $\Delta W$ for task adaptation is represented with a low-rank decomposition $BA$. The forward pass with LoRA is as follows:
    $$W_{frozen}x + \Delta Wx = W_{frozen}x + \mathbf{BA}x, $$
where $W_{frozen}, \Delta W \in \mathbb{R}^{d\times k}$, $A \in \mathbb{R}^{r\times k}$, $B \in \mathbb{R}^{d\times r}$ and $r \ll min(d, k)$. $A$ is typically initialized with random Gaussian initialization and $B$ is initialized with zero to have $\Delta W = 0$ at the start of training. During training, $W_{frozen}$ is frozen and only $B$, $A$ are optimized. At deployment, the learned low-rank matrices can be merged with the frozen weights of the pre-trained model. 

\textbf{Frank-Wolfe}
The Frank-Wolfe method, also known as the conditional gradient method, is an optimization algorithm for solving constrained convex, and more recently nonconvex, optimization problems. The key feature of the Frank-Wolfe method is how it handles the constraints. Instead of projecting onto the constraint set via projections, it uses a linear optimization oracle. Iteratively, the method finds a linear approximation of the objective function within the feasible region and moves towards the minimizer of this approximation.  

The Frank-Wolfe algorithm is particularly suited for problems in which linear optimization is easier than Euclidean projections. For this reason, ``projection free" methods were considered in the machine learning community \cite{Hazan08,Jaggi13b,HazanK12,garber2016linearly}.  More recently nonconvex optimization was considered using the Frank Wolfe method in \cite{lacoste2016convergence,reddi2016stochastic}.

\subsection{Chain of LoRA}\label{subsec:cola}
In this section we give the details of our simple yet effective optimization framework for efficient parameter finetuning of large language models. The key idea of our method is to form a chain of LoRAs and iteratively learn the low-rank adaptation LoRA modules. As illustrated in Figure~\ref{fig:COLA}, our method is comprised of three stages: \textbf{Tune LoRA, Tie a knot, Extend the chain}. We first introduce  notations, followed by an explanation of the three stages in the workflow. We also provide the detailed step-by-step procedure in Algorithm ~\ref{pseudo}.

\begin{algorithm}[h]
\caption{Chain of LoRA (COLA)}
\label{pseudo}
\begin{algorithmic}
\STATE Input: frozen pre-trained weights $W$, chain knots $\{\tau_1,\dots, \tau_m\}$, finetuning dataset $\mathcal{D}$, training objective $\mathcal{L}$, total training iterations T.
\STATE Initialize LoRA params to $A_0, B_0$
\FOR{$t = 1, \ldots, T$}
\STATE Sample minibatch $\mathcal{B}_t \subset \mathcal{D}$
\IF{$t \in \{\tau_1,\dots, \tau_m\}$} 
    \STATE \textbf{Tie knot}: Merge LoRA to backbone weights $W = W + B_tA_t$ \\

    \STATE \textbf{Extend chain}: 
    Re-initialize LoRA parameters $A_t = A_0, B_t = B_0$
\ENDIF

\STATE forward pass with LoRA
\STATE backward pass and update LoRA parameters 
$$ (A_t,B_t) = (A_{t-1},B_{t-1}) - \eta_t* \hat{\nabla}_{A,B} \mathcal{L}(W_t) $$

\ENDFOR
\vskip -0.3in
\end{algorithmic}
\end{algorithm}

For a pre-trained LLM weight matrix $W_{pretrained} \in \mathbb{R}^{d\times k}$, we denote the weights update occurred during fine-tuning as $\Delta W$. Ideal adaptation yields the optimal weights $ W^\star$ tailored for the given task and the corresponding optimal weight update $\Delta W^\star$, as shown below. 
$$W^\star = W_{pretrained} + \Delta W^\star$$

In COLA, we propose to approximate $\Delta W^\star$ with a chain (basically a sequence) of low-rank matrix decompositions $\{(A_1,B_1), \dots, (A_M,B_M)\}$, where $A_i \in \mathbb{R}^{r_i\times k}$, $B_i \in \mathbb{R}^{d\times r_i}$ and $r_i \ll min(d, k)$ for $1 \leq i \leq M$. Each low-rank tuple $(A_i,B_i)$ is obtained by optimizing
$$\arg \min_{B_iA_i} \mathcal{L}(W_{pretrained} + \sum_{j=1}^{i} B_jA_j) ,$$ 
where $\mathcal{L}$ is the task-specific objective function. \textbf{\textit{COLA follows an iterative residual learning paradigm.}} Fine-tuning each $(A_i,B_i)$ can be viewed as learning the residual of $\Delta W^\star - \sum_{j=1}^{i-1} B_jA_j$, which is an easier optimization problem compared to learning $\Delta W^\star$ from scratch.
We hypothesize that $\sum_{i=1}^{M} B_iA_i$ approximates $\Delta W^\star$ better than a single LoRA update $BA$, and we design a chaining framework to achieve this with less computation compared to the baseline LoRA. 

COLA forms a chain of LoRAs by iteratively tuning, merging, and extending LoRA modules, as depicted in Figure~\ref{fig:COLA}. We denote the length of the chain in COLA as the number of residual LoRA modules optimized. For COLA with a chain length of M, the three sub-steps in Figure~\ref{fig:COLA} are repeated M times. Below we describe the three  sub-steps in detail.

\textbf{Tune LoRA}: In this step, we perform standard LoRA tuning, i.e., learning only the A and B matrices and leaving all other model parameters untouched. At initialization of COLA, this step learns LoRA modules $(A_1, B_1)$ on top of the frozen pre-trained LLM weights $W_{pretrained}$. After the initial phase of COLA, the LoRA modules $(A_i, B_i)$ are fine-tuned on top of fixed model weights incorporated with previously learned LoRAs' weights. The fixed model weights at the i-th iteration of COLA is $W_{pretrained} + \sum_{j=1}^{i-1} B_jA_j$.

\textbf{Tie a knot}: After the current LoRA modules $(A_i, B_i)$ are trained, we merge them into the previously frozen LLM weights and we refer to this step as "tie a knot". This way, we incorporate the weight update, approximated by $B_iA_i$, into the frozen model weights. The resulting frozen model weights becomes $W_{pretrained} + \sum_{j=1}^{i} B_jA_j$. This allows learning only the residual information $\Delta W^\star - \sum_{j=1}^{i} B_jA_j$ for the next iteration. Additionally, merging the LoRA modules into the frozen LLM helps reduce memory burden under limited resource scenarios. Instead of storing a list of LoRA modules introduced in the COLA, merging them to the frozen model weights in a running fashion helps keep the GPU memory consumption the same as training LoRA only once.

\textbf{Extend the chain}: We extend the COLA chain by re-initializing a new set of LoRA module $(A_{i+1}, B_{i+1})$ to learn the residual weights update needed to adapt the LLM to certain task. In this step, the newly introduced $A_{i+1}$ adopts Gaussian initialization and $B_{i+1}$ is initialized to zero, following ~\citet{hu2021lora}. Additionally, we reset all of the optimizer states, including but not limited to the parameters to be optimized and the gradient history. 

\paragraph{Training and Inference cost of COLA.}
The training cost of COLA is determined by the rank of the LoRA modules used to form the chain. The training computation for COLA is the same as LoRA when the rank is the same. In COLA, progressively lowering the rank of the LoRA modules may be an effective strategy to approximate optimal residual weight updates for specific tasks and lower the overall training cost. We explore this direction in our experiment section. At inference, all of the learned $B_jA_j$ can be integrated into the original model weights. Since $W_\text{pretrained}$ has the same shape as $B_jA_j$, the final integrated model weight has the same number of parameters as the original pre-trained LLM. Therefore, no latency overhead is introduced during inference.

\section{Convergence of COLA and the Nonconvex Frank-Wolfe method}

The COLA algorithm described in figure \ref{pseudo} is motivated by and closely related to the Frank Wolfe algorithm \cite{frank1956algorithm}. To see this, notice that COLA is an iterative algorithm whose iterations are succinctly described by the equation
$$  W \leftarrow W + \arg \min_{BA} \mathcal{L}(W + BA)  . $$
Taking the linear Taylor approximation we can write 
$$ \mathcal{L}(W + BA ) \approx L(W) + \nabla \mathcal{L}(W) \times BA , $$
and thus, a constrained minimization over a set $\K \subseteq \reals^d$ can be seen to be approximately 
$$ \arg \min_{BA \in \K} \mathcal{L}(W + BA)  \approx \arg\min_{BA \in \K}  \nabla \mathcal{L}(W) \times BA  .$$
This is reminiscent of the Frank-Wolfe algorithm, which was historically developed in the context of linear programming. Below we analyze a variant of the Frank Wolfe algorithm for stochastic non-convex smooth optimization. The algorithm pseudo-code is given in Algorithm \ref{alg:condgrad}, and it is written in COLA notations as an application to fine tuning of LLM. The stochasticity is captured in equation \eqref{algstep:linearopt}, where it is assumed that the direction of the gradient is approximated up to $\eps$ using a stochastic gradient method.

\begin{algorithm}[H]
\caption{Idealized COLA}
\label{alg:condgrad}
\begin{algorithmic}
\STATE Input: step sizes $\{ \eta_t \in (0,1] , \ t \in [T]\}$, initial $W_1 \in \K$. 
\FOR{$t = 1$ to $T$}
\STATE Approximate via stochastic optimization 
\begin{equation}  \label{algstep:linearopt} \vv_{t} \in_\eps  \arg \min_{\x \in \K} \left\{\x^\top {\nabla} \mL(W_t) \right\} 
\end{equation}
\STATE $W_{t+1} \gets W_t + \eta_t(\vv_t - W_t)$.
\ENDFOR
\end{algorithmic}
\end{algorithm}

Specifically, we assume that COLA performs gradient updates such that after every epoch we have that 
\begin{equation*}
\vv_{t}^\top {\nabla} \mL(W_t) \leq   \arg \min_{\x \in \K} \left\{\x^\top {\nabla} \mL(W_t) \right\} + \eps.
\end{equation*}
Notice that we have replaced the low rank matrices $A,B$ with a single matrix $W$. This deviates from the exact specification of COLA, but can be justified according to the following intuition. Linear optimization over the trace norm ball results in a rank one solution, as shown in the context of the Frank Wolfe method in \cite{Hazan08,allen2017linear}. In COLA, we perform nonconvex optimization over $A,B$ directly, and their rank can be larger than one. 

Below we give an analysis of this algorithm which incorporates the stochastic approximation of the iterates $A_t,B_t$. 

Henceforth, let $h_t = \mL(\x_t) - \mL(\x^*) $, and 
$$ g_t \triangleq \left\{ \max_{\vv \in \K} \nabla \mL(\x_t)^\top (\vv - \x_t) \right\} . $$
The latter quantity is a metric of convergence in nonconvex optimization, which is sometimes called the Frank-Wolfe gap. Notice that $g_t$ is zero if and only if the projected gradient of $\mL$ at $\x_t$ is zero. 

The following theorem establishes that Algorithm \ref{alg:condgrad} guarantees average duality gap approaching zero for stochastic smooth nonconvex optimization, as long as the distribution shift is bounded sublinearly with time.
\begin{theorem} \label{thm:offlineFW}
Algorithm \ref{alg:condgrad} applied to a sequence of stochastic gradients of $\beta$-smooth nonconvex functions that are bounded in $\K$ by $M$, with step sizes $\eta_t =  \frac{\sqrt{M}}{D \sqrt{\beta T}}$  attains the following convergence guarantee
$$ \frac{1}{T} \sum_{t=1}^T g_t \leq \frac{2 \sqrt{ M \beta}  D }{\sqrt{T}} + \eps$$
\end{theorem}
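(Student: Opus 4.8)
The plan is to run the standard descent-lemma-plus-telescoping argument for Frank-Wolfe on smooth nonconvex objectives, adapted to carry the approximation error $\eps$ coming from the stochastic linear oracle. First I would invoke $\beta$-smoothness of $\mL$ at two consecutive iterates. Since the update in Algorithm~\ref{alg:condgrad} is $\x_{t+1} = \x_t + \eta_t(\vv_t - \x_t)$, the displacement is $\x_{t+1}-\x_t = \eta_t(\vv_t-\x_t)$, and smoothness yields
\begin{equation*}
\mL(\x_{t+1}) \le \mL(\x_t) + \eta_t\,\nabla\mL(\x_t)^\top(\vv_t-\x_t) + \frac{\beta\eta_t^2}{2}\,\|\vv_t-\x_t\|^2 .
\end{equation*}
The quadratic term I would control by the diameter bound $\|\vv_t-\x_t\|\le D$ (both points lie in $\K$), contributing at most $\frac{\beta\eta_t^2 D^2}{2}$.

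Second, I would convert the approximate linear-optimization step \eqref{algstep:linearopt} into a bound on the linear term via the Frank-Wolfe gap. Because $\vv_t$ is an $\eps$-approximate minimizer of $\x\mapsto \x^\top\nabla\mL(\x_t)$ over $\K$, subtracting $\x_t^\top\nabla\mL(\x_t)$ from the oracle guarantee gives
\begin{equation*}
\nabla\mL(\x_t)^\top(\vv_t-\x_t) \le \min_{\x\in\K}\nabla\mL(\x_t)^\top(\x-\x_t) + \eps = -g_t + \eps,
\end{equation*}
so the linear term decreases $\mL$ by essentially $\eta_t g_t$ up to the additive slack $\eta_t\eps$. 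Substituting both bounds into the smoothness inequality and rearranging isolates the gap:
\begin{equation*}
\eta_t\, g_t \le \mL(\x_t) - \mL(\x_{t+1}) + \eta_t\eps + \frac{\beta\eta_t^2 D^2}{2}.
\end{equation*}

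Third, with the constant step size $\eta_t\equiv\eta$ I would sum over $t=1,\dots,T$. The function-value differences telescope to $\mL(\x_1)-\mL(\x_{T+1})$, which is at most $M$ by the boundedness assumption, while the $\eps$ and quadratic terms accumulate linearly in $T$. Dividing through by $\eta T$ then gives
\begin{equation*}
\frac{1}{T}\sum_{t=1}^{T} g_t \le \frac{M}{\eta T} + \frac{\beta\eta D^2}{2} + \eps .
\end{equation*}
Finally I would substitute $\eta=\frac{\sqrt M}{D\sqrt{\beta T}}$: the first term becomes $\frac{\sqrt{M\beta}\,D}{\sqrt T}$ and the second becomes $\frac{\sqrt{M\beta}\,D}{2\sqrt T}$, so their sum stays comfortably below $\frac{2\sqrt{M\beta}\,D}{\sqrt T}$, which yields the claim.

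The calculus here is routine; the steps demanding care are the bookkeeping ones. The \emph{main subtlety} is correctly translating the approximate-oracle guarantee into the gap inequality with the right sign and with $\eps$ entering additively, ensuring that $g_t$ is the nonnegative gap measuring available descent and is consistent with $\vv_t$ approximately minimizing the linearized objective. After that, the only remaining effort is balancing the two $\eta$-dependent error terms through the prescribed step size to match the stated constant; since the chosen $\eta$ is slightly larger than the exact minimizer of $\frac{M}{\eta T}+\frac{\beta\eta D^2}{2}$, the resulting bound is not tight but stays safely under the factor $2$.
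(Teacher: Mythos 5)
Your proposal is correct and follows essentially the same route as the paper's proof: $\beta$-smoothness along the update $\x_{t+1}=\x_t+\eta_t(\vv_t-\x_t)$, the diameter bound $\|\vv_t-\x_t\|\le D$, the $\eps$-approximate oracle turned into $\nabla\mL(\x_t)^\top(\vv_t-\x_t)\le -g_t+\eps$, telescoping, and the prescribed constant step size — indeed you handle the signs more carefully than the paper itself, whose definition of $g_t$ and intermediate lines carry sign slips that your version silently corrects, and you retain the factor $\tfrac12$ on the quadratic term to land at $\tfrac{3}{2}\sqrt{M\beta}D/\sqrt{T}\le 2\sqrt{M\beta}D/\sqrt{T}$. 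The only (immaterial) quibble is your closing aside: the chosen $\eta=\frac{\sqrt M}{D\sqrt{\beta T}}$ is in fact \emph{smaller}, not larger, than the exact minimizer $\sqrt{2M/(\beta T D^2)}$ of $\frac{M}{\eta T}+\frac{\beta\eta D^2}{2}$, though this does not affect your computed bound.
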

\begin{proof}
We denote $\nabla_t = \nabla \mL(\x_t)$.  For any set of step sizes, we have
\begin{eqnarray*}\label{old_fw_anal}
	& h_{t+1}  =   \mL(\x_{t+1}) - \mL(\x^\star)  \\
	& =   \mL(\x_t + \eta_t(\vv_t - \x_t)) - \mL(\x^\star)   \\
	&\leq \mL (\x_t) - \mL (\x^\star) + \eta_t(\vv_t-\x_t)^{\top}\nabla_t \\
 & + \eta_t^2 \frac{\beta}{2}\Vert{\vv_t-\x_t}\Vert^2   &  \textrm{smoothness }  \\
 &\leq  \mL(\x_t) - \mL(\x^\star) +   \eta_t(\vv_t-\x_t)^{\top} \nabla_t   \\
 & + \eta_t^2 \frac{\beta}{2} D^2   \\
	&\leq  h_t + \eta_t ( g_t + \eps) + \eta_t^2 \frac{\beta D^2 }{2} . &  \textrm{$\vv_t$ choice} . 
\end{eqnarray*}
Here we denoted by $D$ the diameter of the set $\K$. 
We reached the equation $  g_t + \eps \leq \frac{h_t - h_{t+1}}{\eta_t} +  \eta_t \frac{\beta D^2} {2}  $. Summing up over all iterations and normalizing we get ,

\begin{eqnarray*}
\frac{1}{T} \sum_{t=1}^T g_{t}  + \eps & \leq \frac{h_0 - h_T} {\eta T} + \eta  \beta D^2 \\
& \leq \frac{M}{\eta T } + \eta \beta D^2  \\
& \leq \frac{2 \sqrt{M \beta} D }{\sqrt{T} },     
\end{eqnarray*}
which implies the Theorem.
\end{proof}

\section{Experimental Setup}

\begin{table*}[t]
\centering
\small
\setlength\tabcolsep{3pt}
\resizebox{0.75\linewidth}{!}{
\begin{tabular}{l|c|c|c|c|c|c|c}
\toprule
\multirow{1}{*}{Task} & \multicolumn{1}{c}{SST-2} & \multicolumn{1}{|c}{WSC}& \multicolumn{1}{|c}{CB}& \multicolumn{1}{|c}{WIC} & \multicolumn{1}{|c}{BoolQ} & \multicolumn{1}{|c}{MultiRC} & \multicolumn{1}{|c}{RTE} \\
\midrule
LoRA& 93.16 & 56.53 & 75.35 & 63.47 & 70.70 &  68.94 & 72.49\\
COLA (ours)& \textbf{93.32}& \textbf{60.19 }& \textbf{76.42}& \textbf{64.26}& \textbf{72.08 }& \textbf{70.63}& \textbf{74.15}\\
relative gains &0.17$\%$ &6.47$\%$ &1.42$\%$ &1.24$\%$ &1.95$\%$ &2.45$\%$ &2.29$\%$  \\
\bottomrule
\end{tabular}
}
\caption{Experiments on OPT-1.3B with 1,000 test examples over various tasks. Task performance is reported after averaging over five random seeds. COLA consistently outperforms LoRA across all tasks.}
\vspace{-0.5em}
\label{table:opt1.3b}
\end{table*}

In this section, we initially outline the tasks and models, followed by an introduction to the methods under comparison in our study. Finally, we provide details on the implementation.

\subsection{Models and Tasks}

\textbf{models}:
We experiment with COLA to fine-tune OPT-1.3B~\cite{zhang2022opt} and Llama2-7B ~\cite{llama2}. Both models' pre-trained checkpoints are from HuggingFace. 

\textbf{datasets}:
We evaluate the effectiveness of our method and compare it with the LoRA baseline on task adaptation across seven classification tasks: SST-2, WSC, CB, WIC, BoolQ, MultiRC, and RTE.  

\textbf{methods compared}:
For the current writeup, we mainly compare with LoRA, a representative PEFT method to train only low-rank matrices while keeping the pre-trained model parameters frozen. For future work, we will add in more baselines. 

\subsection{implementation details}
We implemented our method with the PyTorch and Transformers library~\cite{wolf2020transformers}. All experiments are carried out on NVIDIA A100 (80G) GPU. 

We adopt the experimental configuration outlined in ~\citet{mezo}, where we randomly select 1000 examples for training, 500 for validation, and another 1000 for testing across each dataset under consideration. In COLA raining, we use AdamW ~\cite{adamw} as the base optimizer and train for a total of 5 epochs. For a fair comparison, we keep the epoch number consistent with our baseline. A linear learning rate schedule is applied with the initial learning rate selected from $\{1\times 10^{-3}, 8\times 10^{-4}, 5\times 10^{-4}, 1\times 10^{-4}, 5\times 10^{-5}\}$. The batch size is chosen from $\{4, 8\}$. The reported results represent the best score after hyperparameter grid-search for all experiments, conducted over five random seeds. 

In implementing LoRA, we adhere to the practice outlined in \citet{hu2021lora}, introducing trainable linear low-rank modules to both query and value projections within all self-attention layers. 
While some research ~\cite{zhang2023adaptive} has explored the application of LoRA to all projection matrices or all weight matrices, the specific choice of where to apply LoRA is not a pivotal aspect of our work.
For OPT experiments, we incorporate bias into the injected LoRA modules, aligning with the approach taken in~\citet{mahabadi2021parameter}. Conversely, in Llama-2 experiments, we deliberately disable bias in LoRA to ensure module key matching with the pre-trained checkpoint "meta-llama/Llama-2-7b-hf." In all experiments, we set the rank of LoRA (denoted as "r") to 8 and $\alpha$ to 16, where the ratio $\alpha/r$ is employed to scale the weight updates.


\section{Results and analysis}

\subsection{Main Results}\label{subsec:mainresults}
We report the test performance of our method and baseline across various tasks in this section. The experiment results on OPT-1.3B are detailed in ~\Cref{table:opt1.3b}, and the results for Llama2-7B are provided in ~\Cref{table:llama2-7B}. Notably, our method consistently outperforms LoRA on all datasets under the same training budget, showcasing its superior performance.

Specifically, for OPT-1.3B experiments, COLA brings a performance boost to LoRA by 3.66 (relative improvement of 6.47$\%$), 1.38 (relative improvement of 1.95$\%$), 1.66 (relative improvement of 2.29 $\%$ ) on tasks WSC, BoolQ and RTE, respectively. 

For Llama2-7B experiments, COLA boosts the test score on RTE from 82.09 to 85.70, which corresponds to a 3.61 gain and 4.40$\%$ relative improvement. On SST-2, the average test scores for both our method and the baseline are the same, possibly due to the relatively low task complexity and the utilization of a subset of test examples.

In our reported results, as detailed in ~\Cref{table:opt1.3b} and ~\Cref{table:llama2-7B}, we maintain consistency by setting the rank of all injected modules in the sequence to 8, aligning with the baseline LoRA setup. Additionally, we use an equal training epoch budget for different methods and thus ensuring the same training computation cost, as explained earlier in Section~\ref{sec:method} .

\begin{table}
\centering
\small
\setlength\tabcolsep{3pt}
\begin{tabular}{l|c|c|c|c|c}
\toprule
\multirow{1}{*}{Task} & \multicolumn{1}{c}{SST-2} & \multicolumn{1}{|c}{WSC}& \multicolumn{1}{|c}{CB}& \multicolumn{1}{|c}{WIC} & \multicolumn{1}{|c}{RTE} \\
\midrule
LoRA& 95.82 & 57.30 & 91.78 & 71.59 & 82.09 \\
COLA (ours)& \textbf{95.82}& \textbf{59.80}& \textbf{93.21}& \textbf{71.66}& \textbf{85.70} \\
relative improvement &0$\%$ &4.36$\%$ &1.56$\%$ & 0.10$\%$& 4.40$\%$ \\

\bottomrule
\end{tabular}
\caption{Experiments on Llama2-7B with 1,000 test examples over various tasks. Task performance is reported after averaging over five random seeds. COLA consistently outperform LoRA across all tasks.}
\vspace{-0.5em}
\label{table:llama2-7B}
\end{table}

\begin{table*}[tp]
  \centering
  \begin{tabular}{@{}lcccccc@{}}
    \toprule
    & \multicolumn{2}{c}{\textbf{CB}} & \multicolumn{2}{c}{\textbf{WSC}} & \multicolumn{2}{c}{\textbf{WIC}} \\
    \cmidrule(r){2-3} \cmidrule(lr){4-5} \cmidrule(l){6-7}
    Methods & test score & train FLOPs saved & test score & train FLOPs saved & test score & train FLOPs saved\\
    \midrule
    LoRA & 75.35 & - & 56.53 & - & 63.47 & - \\
    \midrule
    COLA (8, 8) &\textbf{76.78} & - &\textbf{59.81} & - &63.51 &\\
    COLA (8, 6) &76.43 &3.60$\times10^{11}$ &58.26 &4.28$\times10^{10}$ &63.85 &5.21$\times10^{11}$\\
    COLA (8, 4) &75.35 &7.20$\times10^{11}$ &57.30 &8.56$\times10^{10}$ &\textbf{64.04} &1.04$\times10^{12}$\\
    COLA (8, 2) &76.07 &1.08$\times10^{12}$ &57.30 &1.28$\times10^{11}$ &63.19 &1.56$\times10^{12}$\\

    \bottomrule
  \end{tabular}
\caption{COLA rank step-down experiments. Method COLA ($r_1$, $r_2$) indicates that the first iteration learns LoRAs with rank $r_1$, and the second iteration learns LoRAs with rank $r_2$. All numbers are reported over five random seeds. COLA (8,8) uses the same amount of training FLOPs as the baseline, as denoted by ``-''.}
\vspace{-0.5em}
\label{table:rankdecay}
\end{table*}

\subsection{Ablation Study}
\textbf{Different number of LoRAs in the chain}: 
As described in Section~\ref{subsec:cola}, COLA consists of repeated iterations of LoRA tuning and merging. We denote the length of COLA as the number of LoRAs learned and merged in the fine-tuning process. To investigate the effect of the chain length of COLA on task adaptation performance, we further conduct experiments by varying the number of LoRAs. Specifically, we studied chain length of 1, 2, 3 and present the findings in Figure~\ref{fig:chainnumber}. 

Here, chain length of 1 corresponds to the baseline LoRA fine-tuning. All experiments are conducted with a total of 5 training epochs. For example, in COLA experiments with chain length of 2, the first LoRA training phase lasts from epoch 1 to epoch 3. After the first LoRA module merges with the pretrained LLM weights and optimizer states reinitialize, the second LoRA starts from epoch 4 to epoch 5, which in total uses the same 5 total training epochs. All experiments results are reported over five random seeds. 

As shown in Figure~\ref{fig:chainnumber}, there is a growing trend of test accuracy as the chain length increases across tasks. This is consistent with our hypothesis that residual learning of LoRA modules will lead to a better approximation of the optimal weight update to the fixed pre-trained LLM for task adaptation. For a majority of tasks, COLA is more robust in terms of generalization error compared to baseline LoRA, as shown by COLA's smaller standard deviations. 

\begin{table}
\centering
\small
\setlength\tabcolsep{3pt}
\begin{tabular}{l|c c c}
\toprule
& \multicolumn{1}{c}{\textbf{WSC}} & \multicolumn{1}{c}{\textbf{WIC}}& \multicolumn{1}{c}{\textbf{MultiRC}}\\
\midrule
length = 1 &56.53 ($\pm$ 7.67) &63.47 ($\pm$ 1.34) &68.94 ($\pm$ 3.06) \\
length = 2 & 59.81 ($\pm$ 4.10) &63.51 ($\pm$ 1.25) &69.44 ($\pm$ 1.55)\\
length = 3 &\textbf{60.19 ($\pm$ 3.77)} & \textbf{64.26 ($\pm$ 2.49)} &\textbf{70.63 ($\pm$ 2.12)}\\
\midrule
& \multicolumn{1}{c}{\textbf{CB}} & \multicolumn{1}{c}{\textbf{BoolQ}}& \multicolumn{1}{c}{\textbf{RTE}}\\
\midrule
length = 1 &75.35 ($\pm$ 4.84) &70.70 ($\pm$ 2.04) &72.49 ($\pm$ 2.39)\\
length = 2 &\textbf{76.78 ($\pm$ 6.38)} &72.04 ($\pm$ 1.75) &72.63 ($\pm$ 1.46)\\
length = 3 &76.42 ($\pm$ 4.97) &\textbf{72.08 ($\pm$ 1.27)} &\textbf{74.15 ($\pm$ 1.36)}\\
\bottomrule
\end{tabular}
\caption{COLA evaluation with varying chain length. Test score across tasks is reported using 5 random seeds and is presented in the ``average ($\pm$ standard deviation)'' format. The highest average performance for each task is highlighted in bold.}
\vspace{-0.5em}
\label{table:chainlength}
\end{table}

\begin{figure}[h]
  \begin{center}
\includegraphics[width=1
\columnwidth]{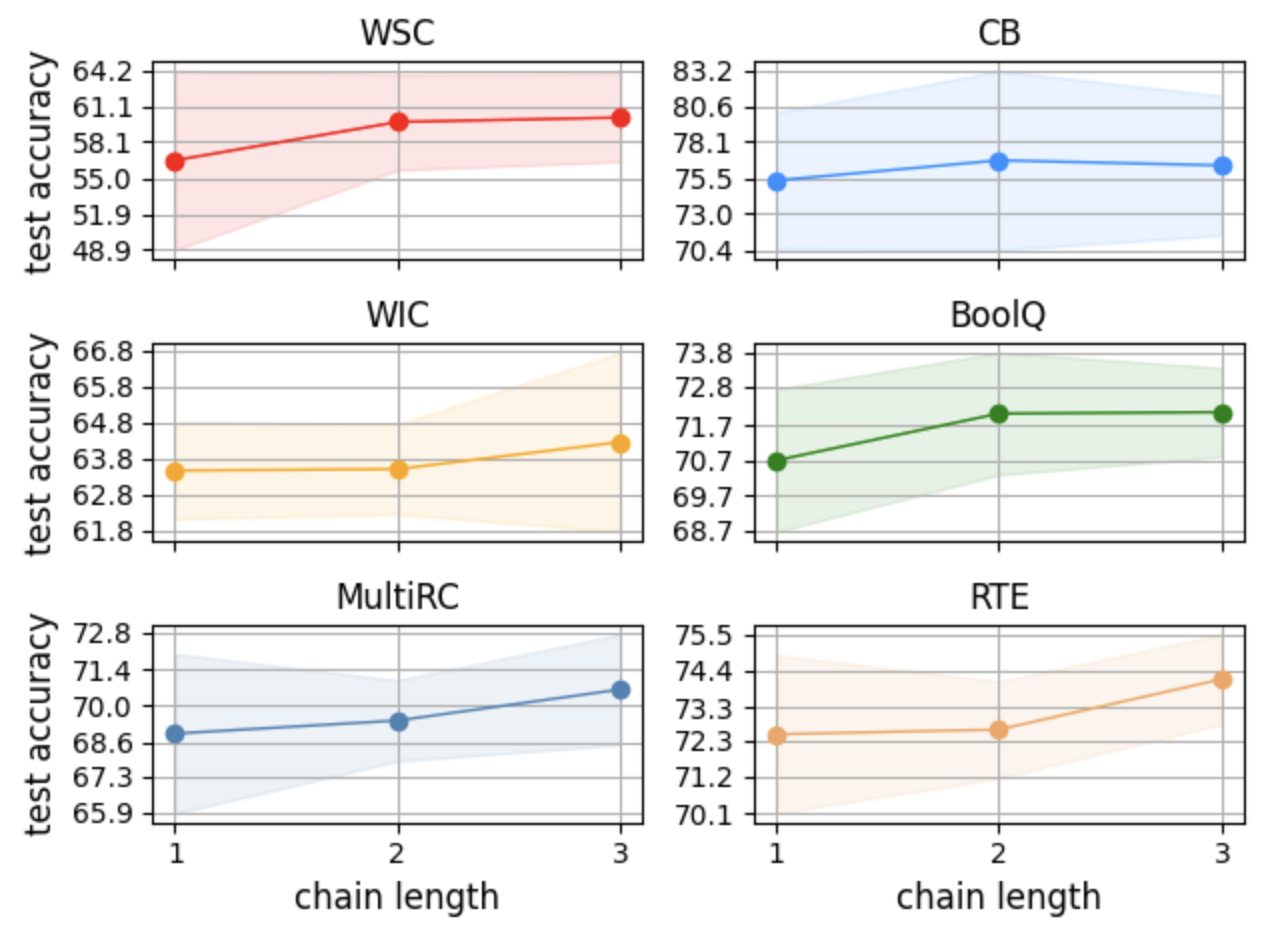}
  \end{center}
  \caption{Ablation study on test accuracy with different COLA chain length across tasks. Results are reported after averaging five different seeds and the shaded area corresponds to standard deviation. The general trend is that the test accuracy increases with the chain length.}
  \label{fig:chainnumber}
\end{figure}

\textbf{Rank step-down}:
Since COLA is a residual learning paradigm, we hypothesize that the residual weight update to be learned for task adaptation should be progressively lower in rank. Therefore, instead of using a chain of LoRAs with a fixed rank of eight, as described in Section \ref{subsec:mainresults}, we conduct further studies on lowering the rank.

Here, we consider a simple setting of COLA with length of two. We fix the rank to 8 for the first three epochs and set the rank for the remaining epochs to either 2, 4, 6, or 8. We show the results in Figure~\ref{fig:rankdecay} and report the test performance in Table~\ref{table:rankdecay}.

Figure~\ref{fig:rankdecay} show that COLA with rank step-down outperforms LoRA with a fixed rank of 8 for all tasks (with the exception of one data point--WIC with rank 2). Thus COLA with rank step-down offers both superior generalization ability over standard LoRA and lower computational cost. In addition, our results indicate that the optimal rank to use for COLA is task-dependent. The CB and WSC tasks both benefit from higher rank LoRA modules in the second learning phase. The WIC task, on the other hand, surprisingly shows maximal test accuracy at a rank of 4 for ($A_2$, $B_2$). 

\begin{figure}[h]
  \begin{center}
\includegraphics[width=1
\columnwidth]{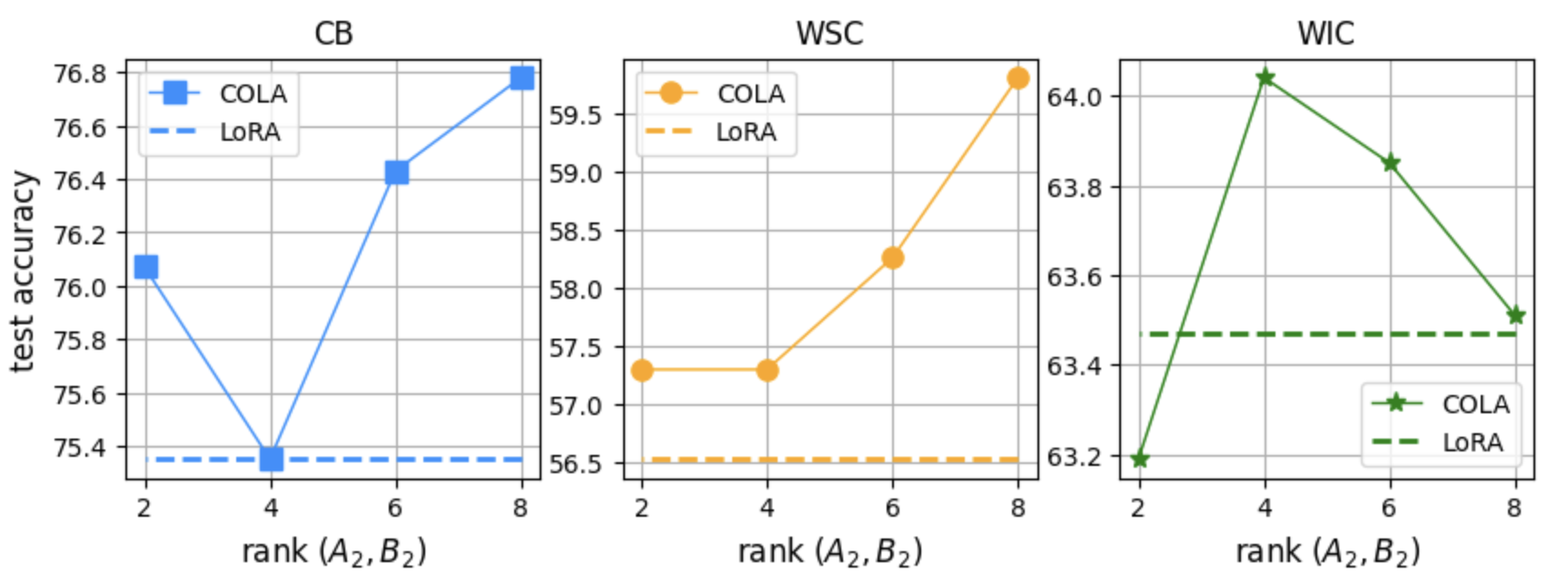}
  \end{center}
  \caption{COLA with rank step-down for three tasks. Experiments are conducted with COLA of length 2 where ($A_1$, $B_1$) has fixed rank of 8, and ($A_2$, $B_2)$ rank is as shown in the figure.}
  \label{fig:rankdecay}
\end{figure}

\textbf{Computation comparison}
Table~\ref{table:rankdecay} provides a detailed comparison of the training computation cost between COLA of different rank step-down configurations and the baseline.

The training FLOPs are obtained from the HuggingFace trainer state, and are reported as the aggregate over five random seeds. The baseline LoRA uses a fixed rank of 8 throughout training, while COLA starts with rank 8 and continues with different ranks in the residual learning phase. As expected, stepping down the rank in the chain results in higher FLOPs savings. Overall, COLA offers lower generalization error with less compute.

\section{Conclusions and future work}
In this work, we introduce Chain of LoRA (COLA) for efficient fine-tuning of large language models. The idea is to use an iterative low rank residual learning procedure to approximate the optimal weight update needed for task adaptation. Preliminary experimental results show that COLA consistently outperforms previous baselines albeit using the same, or less, computational resources. 

We are actively working on applying COLA with different base optimizers and further experiments on larger scale LLMs. We are also experimenting beyond classification tasks, namely generation, summarization, and multiple choice. 

\appendix
\section{Appendix}
\label{sec:appendix}

\subsection{Case Study} \label{sec:case_study}




\label{sec:case_study}

\bibliography{COLA}
\bibliographystyle{icml2024}


\end{document}